\DeclareMathOperator*{\argmax}{arg\,max}
\newtheorem{theorem}{Theorem}
\newtheorem{definition}{Definition}
\pgfplotsset{compat=newest, scaled z ticks=false} 
\pgfplotsset{plot coordinates/math parser=false}
\newlength\figureheight 
\newlength\figurewidth
\title{Defense Against Explanation Manipulation}
\author{
  Ruixiang Tang \\
  Texas A\&M University \\
  \texttt{rxtang@tamu.edu} \\
  \And
  Ninghao Liu \\
  Texas A\&M University \\
  \texttt{nhliu43@tamu.edu} \\
  \And
  Fan Yang \\
  Texas A\&M University \\
  \texttt{nacoyang@tamu.edu} \\
  \And
  Na Zou \\
  Texas A\&M University \\
  \texttt{nzou1@tamu.edu} \\
  \And
  Xia Hu \\
  Rice University \\
  \texttt{xia.hu@rice.edu} \\
}
\begin{document}

\maketitle

\begin{abstract}
Explainable machine learning attracts increasing attention as it improves transparency of models, which is helpful for machine learning to be trusted in real applications. However, explanation methods have recently been demonstrated to be vulnerable to manipulation, where we can easily change a model's explanation while keeping its prediction constant. To tackle this problem, some efforts have been paid to use more stable explanation methods or to change model configurations. In this work, we tackle the problem from the training perspective, and propose a new training scheme called Adversarial Training on EXplanations (ATEX) to improve the internal explanation stability of a model regardless of the specific explanation method being applied. Instead of directly specifying explanation values over data instances, ATEX only puts requirement on model predictions which avoids involving second-order derivatives in optimization. As a further discussion, we also find that explanation stability is closely related to another property of the model, i.e., the risk of being exposed to adversarial attack. Through experiments, besides showing that ATEX improves model robustness against manipulation targeting explanation, it also brings additional benefits including smoothing explanations and improving the efficacy of adversarial training if applied to the model. The codes of our work are available at: \url{https://github.com/DefIntpMan}.
\end{abstract}

\section{Introduction}
Despite the significant improvements over traditional approaches in many tasks, deep models are usually criticized as being black-boxes~\cite{du2019techniques, lipton2018mythos, Ribe-etal16whyshould}. To tackle this problem, explanation methods have attracted increasing attention as they provide a tool for understanding how predictions are made by complex models. Methods that produce feature importance maps~\cite{Simonyan-etal13deepInsideCNNsaliency, Smilkov-etal18smoothgrad, Sundararajan-etal16integratedGradient} are commonly used as their explanation results are visually intuitive. Furthermore, explanation methods are expected by model developers to diagnose the defects in models~\cite{Guo-etal18lemna, Liu-etal18adversarial, Ribe-etal16whyshould} or abnormalities in data instances~\cite{Fong-Vedaldi17perturbation}.

Nevertheless, recent work discovered that explanation methods, when applied to deep models, are easy to be manipulated~\cite{ghorbani2019interpretation}. That is, we are able to change explanation results without changing model predictions. To tackle this challenge, some efforts~\cite{Yeh-etal19sensitivityOfExplanation} have been paid to improve the stability of explanation methods by using SmoothGrad~\cite{Smilkov-etal18smoothgrad}. In addition,~\cite{Dombrowski-etal19geometry} proposes to replace ReLU activation with the smoothed softplus function to obtain explanations similar to SmoothGrad. However, in the original work~\cite{ghorbani2019interpretation}, the ReLU activation has already been changed to softplus function, while explanations could still be easily manipulated. It thus implies that more effective techniques, besides smoothing explanations or activation functions, are needed in order to stabilize explanation results.

In this work, we try to modify the training process of neural models to improve their inherent robustness against manipulation targeting explanations. We call our approach as Adversarial Training on EXplanations (ATEX). Different from existing efforts which try to select or design a specific explainer that is more stable~\cite{Levine-etal19certifyRobustInterp, Yeh-etal19sensitivityOfExplanation}, ATEX could benefit various existing explanation methods. Different from the method in~\cite{Dombrowski-etal19geometry}, we do not need to change the model architecture. More precisely, through training with augmented data, ATEX regularizes model explanations around data samples. However, explicitly controlling explanation results is computationally prohibitive as it requires a significant amount of computation for second-order gradients. Therefore, ATEX implicitly regularizes explanation, and it only requires information of model predictions (zero-order) and gradients (first-order). 

Besides stabilizing model explanation, ATEX also brings two additional advantages. First, ATEX helps smoothing the feature importance maps of models, even we only use raw gradient instead of SmoothGrad to compute feature importance. Second, ATEX could improve the efficacy of adversarial training on predictions~\cite{Goodfellow-etal14explaining, Madry-etal17deepResistant} which defends against adversarial samples that cause the model to make wrong predictions. Specifically, traditional adversarial training~\cite{Goodfellow-etal14explaining} suffers from the problem that models easily overfit to adversarial examples~\cite{Madry-etal17deepResistant}, and an adversarially trained model turns out to be less robust against adversarial examples crafted with different perturbation directions. In this work, we show that the ineffectiveness of adversarial training stems from the same source as model interpretation instability. As a result, applying ATEX will increase the efficacy of adversarial training.

The key contributions of this work is summarized as below:
\begin{itemize}
    \item We propose a novel adversarial training method called ATEX to increase the stability of explanation of models, so that explanation results are less sensitive to malicious manipulation.
    \item Models trained with ATEX will produce visually smoothed feature importance maps with one-shot gradient, without applying sophisticated approaches such as SmoothGrad.
    \item We discuss the positive correlation between interpretation stability and adversarial training efficacy. Through experiments, we show that the efficacy of adversarial training is improved when applied on models fine-tuned with ATEX.
\end{itemize}
To avoid confusion, we use ``manipulation" to refer to attack on explanation, while ``adversarial attack" still means attack on model prediction. Correspondingly, we use ``ATEX" to mean adversarial training on explanation, while ``adversarial training" alone still means the defense method to improve prediction robustness.

\section{Algorithm Design for Defense Against Manipulation}
\subsection{Explanation Manipulation}
We consider the target neural network model $f: \mathbb{R}^D\rightarrow \mathbb{R}^C$ with softplus non-linearities, where an input instance $\textbf{x}\in \mathbb{R}^D$ is predicted as belonging to class $c^*=\argmax_{c} f_{c}(\textbf{x})$. Given an instance $\textbf{x}$ of interest, the explanation for prediction $f_c(\textbf{x})$ is $\phi(f_c, \textbf{x})$, where $\phi: \mathcal{F}\times \mathbb{R}^D \rightarrow \mathbb{R}^D$ denotes the explanation function. To facilitate discussion, during the development of ATEX, we assume $\phi$ is based on vanilla gradient~\cite{Simonyan-etal13deepInsideCNNsaliency}, i.e., $\phi(f_c, \textbf{x}) = \nabla_{\textbf{x}}f_c(\textbf{x})$. The relative importance score of the $t$-th feature is computed as $|\phi_t(f_c, \textbf{x}')|/\|\phi(f_c, \textbf{x}')\|_1$, which is commonly used in feature importance maps. We will further discuss the scenarios of using other explanation methods in experiments.

The problem of manipulating explanation could be formulated as below~\cite{Ghorbani-etal19fragile}:
\begin{equation}\label{eq:adv_def}
    \begin{split}
        &\argmax_{\textbf{x}'}\,\, d(\phi(f_c, \textbf{x}'), \phi(f_c, \textbf{x})) \\
        s.t.\,\,\,\,\, & \| \textbf{x}'-\textbf{x} \| \le \epsilon_1, \,\, \| f_{c}(\textbf{x}') - f_{c}(\textbf{x}) \| \le \epsilon_2 ,
    \end{split}
\end{equation}
where $d(\cdot, \cdot)$ is the manipulation objective, the first constraint limits perturbation range, and the second constraint preserves prediction. Some typical objectives include:
\begin{itemize}[leftmargin=*]
    \item \textbf{Targeted Attack} controls explanation to be close to certain predefined patterns, where $d(\phi(f_c, \textbf{x}'), \phi(f_c, \textbf{x}))=\sum_{t\in\mathcal{T}} |\phi_t(f_c, \textbf{x}')|/\|\phi(f_c, \textbf{x}')\|_1$ and $\mathcal{T}$ is the set of features that the manipulator wants to highlight.
    \item \textbf{Untargeted Attack} suppresses the contribution of features that were considered as important in clean samples, where $d(\phi(f_c, \textbf{x}'), \phi(f_c, \textbf{x}))=\sum_{t\in\mathcal{T}} - |\phi_t(f_c, \textbf{x}')|/\|\phi(f_c, \textbf{x}')\|_1$ and $\mathcal{T}$ is the set of important features in $\phi(f_c, \textbf{x})$. It is worth noting that $\mathcal{T}$ contains different elements between targeted and untargeted attack scenario.
\end{itemize}

\subsection{A Na\"ive Solution}
Assume $g$ is the new model to train, a straightforward design for adversarial training is to explicitly require explanations to be constant within the neighborhood of each training sample:
\begin{equation}\label{eq:naive_atex}
    \begin{split}
        \min_g\,\, \sum_{\textbf{x}\in \mathcal{X}} [\, \alpha_1 L(g(\textbf{x}), y) + \sum_{\textbf{x}'\sim \mathcal{N}(\textbf{x}, \epsilon)}  [\, \alpha_2 L(g(\textbf{x}'), y) + d(\phi(g_y, \textbf{x}'), \phi(g_y, \textbf{x}))] \,] ,
    \end{split}
\end{equation}
where $L(\cdot, \cdot)$ denotes the instance-level training loss between a prediction and the true label. $\mathcal{N}(\textbf{x}, \epsilon)$ denotes the neighborhood around $\textbf{x}$ within distance of $\epsilon$. The last term in the inner summation explicitly controls the variation of explanation around training samples, while the other terms preserve model prediction performance. Such a design closely mimics the paradigm of traditional adversarial training over model predictions~\cite{Goodfellow-etal14explaining}.

Nevertheless, there are two problems for the formulation in Equation~\ref{eq:naive_atex}. First, since $\phi$ usually relies on first-order partial derivative information, optimization over explanation maps require computing and propagating second-order partial derivatives, which could be costly to iterate over all training samples. Second, the first term in Equation~\ref{eq:naive_atex} assumes that $\phi(g_y, \textbf{x})$ is the ground-truth explanation. However, there could be defects (e.g., noises) in $\phi(g_y, \textbf{x})$, which makes it not a good target to fit. In addition, since we mainly care about the \textit{stability} of explanation, specifying a concrete ground-truth may not be necessary.
\label{explanation manipulation}

\begin{figure}[t]
\centering
 \includegraphics[width=1.02\textwidth]{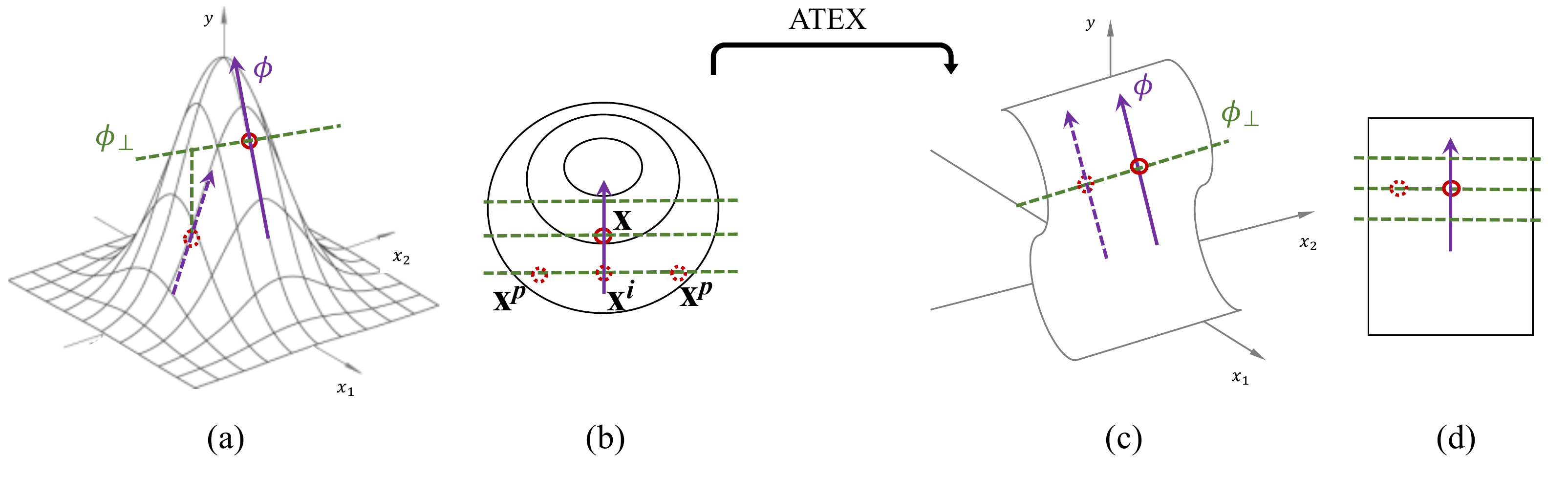}
 \caption{Illustration of explanation stability and ATEX idea. (a): One perspective of why the explanation is prone to be manipulated, i.e., moving an instance along $\phi_{\perp}$ will change its explanation as well as prediction. (b): Illustration of ATEX training process (overhead view from y-axis), where each augmented data instance goes through two rounds of sampling. In the first round, $\textbf{x}^i$ is sampled along explanation direction. In the second round, $\textbf{x}^p$ is sampled perpendicularly to explanation. (c) and (d): A prediction function (an ideal case) that is robust to explanation manipulation.} \label{fig:atex}
\end{figure}

\subsection{Adversarial Training on Explanations (ATEX)}
Let $\textbf{x}' = \textbf{x} + \Delta\textbf{x}$, the sensitivity of gradient-based explanation is $\Delta\phi = \phi(f, \textbf{x}+ \Delta\textbf{x}) - \phi(f, \textbf{x}) = \textbf{H}\Delta\textbf{x} + \mathcal{O}(\|\Delta\textbf{x}\|^2)$, where $\textbf{H}$ is the Hessian matrix and $\textbf{H}_{i,j}=\frac{\partial f}{\partial \textbf{x}_i \partial \textbf{x}_j}$. If $f$ is simply a linear model, then $\phi$ is robust to any manipulation since the Hessian matrix is all-zero. However, a hard requirement to eliminate non-linearity in a deep model would reduce its prediction accuracy. The requirement could be relaxed as long as the explanation is stable according to the below definition.
\begin{definition}
We define the stability of explanation around an instance $\textbf{x}$ as:
\begin{equation}
    \min_{\gamma> 0} \max_{\Delta\textbf{x}}\, \| \phi(f, \textbf{x}+ \Delta\textbf{x}) - \gamma \phi(f, \textbf{x}) \|_2  \,.
\end{equation}
\end{definition}
Different from the proposition in~\cite{Ghorbani-etal19fragile}, we assume a positive scaling does not change explanation, as the relative importance between features is not changed. This is why a coefficient $\gamma$ is introduced here. The definition is compatible with the common metrics for explanation similarity such as Spearman correlation and top elements intersection~\cite{Dombrowski-etal19geometry, Ghorbani-etal19fragile}. One form of $f$ that has stable explanation locally around $\textbf{x}$ could be written as $f(\textbf{x})=\sigma(\phi^{\intercal} \textbf{x})$, where the weights are defined with explanation vector and $\sigma: \mathbb{R}\rightarrow \mathbb{R}$ is a monotonically increasing non-linear function. We have $\phi(f, \textbf{x}) = \sigma'(\phi^{\intercal} \textbf{x})\cdot \phi$. Since $\sigma'(\phi^{\intercal} \textbf{x})$ is a scalar, perturbing input with $\Delta \textbf{x}$ only re-scales $\phi$, thus satisfying the definition above if we let $\gamma = \sigma'(\phi^{\intercal} \textbf{x})$.

Considering the definition above, there are two factors to consider in algorithm design: (i) how to set the nonlinear function $\sigma$; (ii) how to regularize $f$ for stable explanation. The high-level idea of ATEX is illustrated in Figure~\ref{fig:atex}. ATEX is a fine-tuning process given the target model $f$. The formal loss function of ATEX is: $\min_g \sum_{\textbf{x}\in \mathcal{X}} J(g, f, \textbf{x})$, where
\begin{equation}\label{eq:atex}
    J(g, f, \textbf{x}) = L(g(\textbf{x}), f(\textbf{x})) + \alpha \sum_{\textbf{x}^i \sim \mathcal{I}(\textbf{x})} \sum_{\textbf{x}^p \sim \mathcal{P}(\textbf{x}^i)} L(g(\textbf{x}^p), f(\textbf{x}^i)).
\end{equation}
The first term is the distillation loss, and the second term could be seen as a regularizer. Given a seed instance $\textbf{x}\in \mathcal{X}$ from the dataset, two additional sampling process is conducted. In Equation~\ref{eq:atex}, the outer summation generates a set of samples, denoted as $\mathcal{I}(\textbf{x})$, along the explanation direction of $\textbf{x}$. That is,
\begin{equation}
    \textbf{x}^i = \textbf{x} + \delta_1 \phi(f, \textbf{x})/\|\phi(f, \textbf{x})\|_2, \,\,\, -\Delta_1 \le \delta_1 \le \Delta_1 ,
\end{equation}
where $\delta_1$ denotes the shift distance, and $\Delta_1$ is a hyperparameter. To guarantee that we are sampling along a representative explanation direction on the prediction function surface, here we use SmoothGrad~\cite{Smilkov-etal18smoothgrad} to compute $\phi$ in order to remove noise. The inner summation generates samples, denoted as $\mathcal{P}(\textbf{x}^i)$, along the perpendicular direction of explanation $\phi(f, \textbf{x})$. Specifically,
\begin{equation}
    \textbf{x}^p = \textbf{x}^i + \delta_2 \phi_{\perp}(f, \textbf{x})/\|\phi_{\perp}(f, \textbf{x})\|_2 \,\,\,, -\Delta_2 \le \delta_2 \le \Delta_2 ,
\end{equation}
where $\phi_{\perp}$ denotes the perpendicular direction to $\phi$. To compute $\phi_{\perp}$, we first generate a random perturbation $\textbf{u}\sim U(\textbf{0}, \Delta_2)$, and $\phi_{\perp} = \textbf{u} - \phi \cdot \langle \textbf{u}, \phi \rangle/\|\phi\|^2_2$. Here $U$ denotes uniform distribution. The rationale behind moving samples along $\phi_{\perp}$ is that, restricting these samples to have the same prediction as $f(\textbf{x}^i)$ implicitly requires the local explanation to be fixed at $\phi$. As shown in the right half of Figure~\ref{fig:atex}.


\section{Explanation Stability vs Adversarial Training Efficacy}\label{sec:adv_stable}
One of the best known adversarial training method is robust optimization~\cite{Madry-etal17deepResistant}. The goal is to approximately solve:
$
    \min_f \mathbb{E}[\max_{\textbf{x}'\in \mathcal{N}(\textbf{x}, \epsilon)} L(f(\textbf{x}'), y)].
$
The inner maximization problem is usually solved through attacking algorithms such as FGSM~\cite{Goodfellow-etal14explaining} and PGD~\cite{Kurakin-etal17atScale}, where $\textbf{x}'$ can be seen as the most threatening adversarial sample as it maximizes the loss. The outer problem trains model parameters to minimize the loss.

\begin{wrapfigure}{r}{0.38\textwidth}
\centering
 \includegraphics[width=0.315\textwidth]{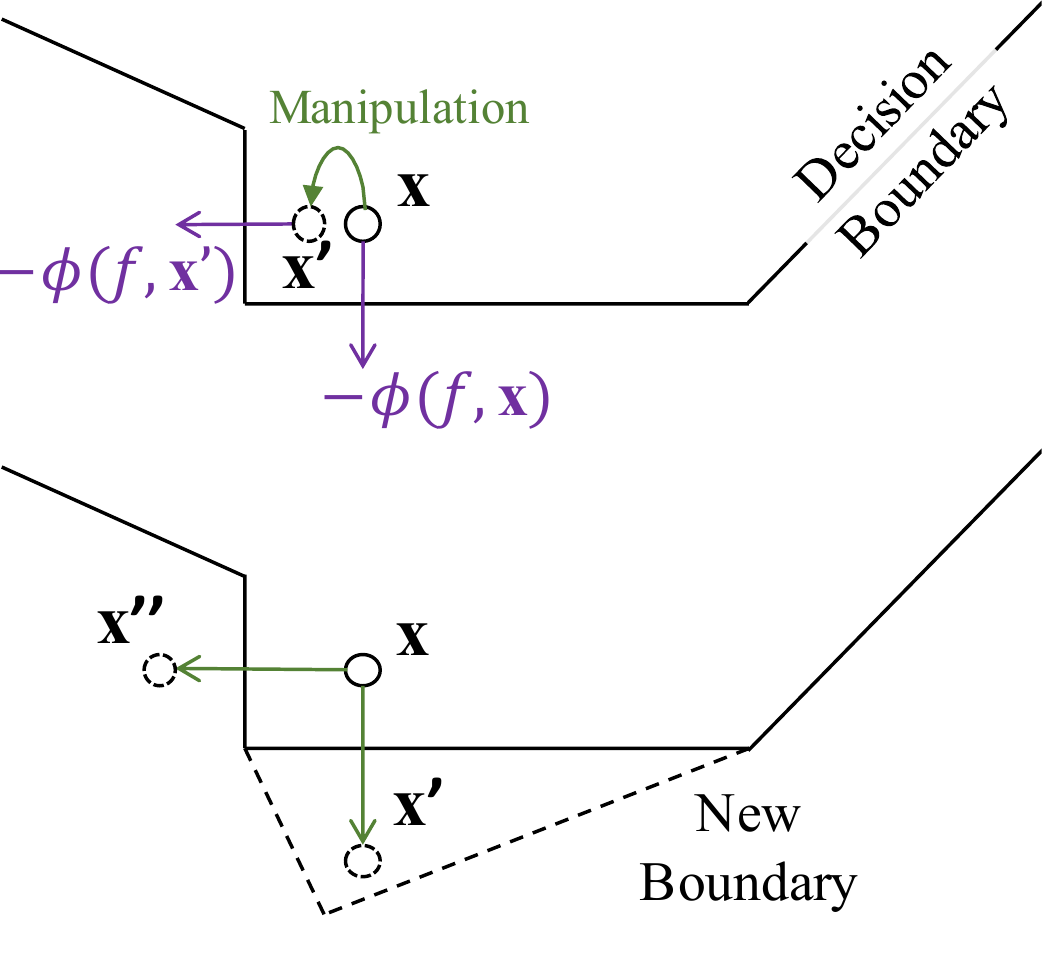} \label{fig:relation}
\end{wrapfigure}

One issue for the above method is that, simply defending against the most threatening adversarial sample is not enough to guarantee prediction robustness. First, other adversarial samples, although leading to smaller losses, could still exist. Second, more adversarial samples could be discovered by using different attacking algorithms. An illustration of such a risk is shown in the lower part of the right figure. Suppose $\textbf{x}'$ is the adversarial sample by perturbing $\textbf{x}$. A new decision boundary is learned via certain defense method, so that $\textbf{x}'$ can no longer fool model prediction. However, it is still possible to perturb $\textbf{x}$ towards other directions (e.g., to $\textbf{x}''$). This prediction is also under the risk of having its explanation been manipulated, as shown in the upper part of the figure. A relation between explanation and adversarial perturbation can be proven as below:
\begin{theorem}
Given a data instance $\textbf{x}_0$, let explanation $\phi(f_c, \textbf{x}_0)$ be defined using vanilla gradient~\cite{Simonyan-etal13deepInsideCNNsaliency}, and adversarial perturbation $\boldsymbol\delta$ be crafted using FGSM~\cite{Kurakin-etal17atScale} without the additional $sign()$ operation, then we have $\phi(f_c, \textbf{x}_0) \propto -\boldsymbol\delta$. The proof can be found in supplementary material. 
\end{theorem}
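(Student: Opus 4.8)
The plan is to unwind both definitions and observe that, once the $\mathrm{sign}(\cdot)$ is stripped away, FGSM produces a perturbation that is exactly a scaled copy of the gradient of the attack loss, which in turn coincides (up to a positive factor) with the vanilla-gradient explanation of $f_c$. First I would recall the form of FGSM without the sign operation: the adversarial perturbation is $\boldsymbol\delta = \eta\, \nabla_{\textbf{x}} L(f(\textbf{x}_0), y)$ for some positive step size $\eta>0$ and attack loss $L$. The entire argument then reduces to computing $\nabla_{\textbf{x}} L$ at $\textbf{x}_0$ and comparing it against $\phi(f_c, \textbf{x}_0) = \nabla_{\textbf{x}} f_c(\textbf{x}_0)$.

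Second, I would fix the attack objective to the natural surrogate for suppressing the predicted class $c$, namely $L = -f_c$, so that ascending this loss decreases the score of the correct class, which is precisely the effect an untargeted attack against class $c$ seeks. With this choice $\nabla_{\textbf{x}} L = -\nabla_{\textbf{x}} f_c = -\phi(f_c, \textbf{x}_0)$, hence $\boldsymbol\delta = -\eta\, \phi(f_c, \textbf{x}_0)$. Since $\eta>0$, rearranging yields $\phi(f_c, \textbf{x}_0) = -\tfrac{1}{\eta}\boldsymbol\delta$, i.e. $\phi(f_c, \textbf{x}_0) \propto -\boldsymbol\delta$ with a strictly positive proportionality constant, which is exactly the claimed statement.

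The main obstacle is justifying the reduction of the attack loss to this single-logit form so that no cross term spoils the proportionality. If instead one uses the cross-entropy loss $L = -\log p_c$ with $p_c = \mathrm{softmax}_c(f)$, a direct computation gives $\nabla_{\textbf{x}} L = -(1-p_c)\nabla_{\textbf{x}} f_c + \sum_{j\neq c} p_j \nabla_{\textbf{x}} f_j$, whose second term is not in general parallel to $\nabla_{\textbf{x}} f_c$. I would resolve this either by adopting the logit objective $L=-f_c$ outright, which is the cleanest route and a standard attack surrogate, or by arguing that to first order the perturbation direction maximally decreasing $f_c$ under the prediction-preserving regime is $-\nabla_{\textbf{x}} f_c$, so the competing-class terms enter only as higher-order corrections. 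Either route delivers the stated antiparallel relationship between the explanation and the sign-free FGSM perturbation, and the positive scaling freedom $\gamma$ already built into the stability definition earlier in the paper renders the factor $1/\eta$ immaterial.
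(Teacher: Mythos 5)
Your proof is correct and rests on the same one-line mechanism as the paper's: choose an attack loss that is a monotone decreasing function of the single scalar $f_c$, so that the chain rule makes $\nabla_{\textbf{x}} L$ a positive scalar times $-\nabla_{\textbf{x}} f_c = -\phi(f_c,\textbf{x}_0)$. The only real difference is which loss plays that role. You adopt the logit surrogate $L=-f_c$, giving $\boldsymbol\delta = -\eta\,\nabla_{\textbf{x}} f_c$. The paper instead keeps the cross-entropy form but applies it directly to the model output, writing $L = -\sum_y \mathds{1}[y=c]\log f_y(\textbf{x}_0) = -\log f_c(\textbf{x}_0)$, so that $\boldsymbol\delta = -\frac{1}{f_c(\textbf{x}_0)}\nabla_{\textbf{x}} f_c(\textbf{x}_0)$; since $1/f_c(\textbf{x}_0)>0$ this is again a positive multiple of $-\phi$. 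In other words, the paper implicitly treats $f_c$ as the (post-softmax) probability and takes the explanation to be the gradient of that very same scalar, which is precisely the assumption needed to kill the cross terms you worried about. Your softmax computation $\nabla_{\textbf{x}} L = -(1-p_c)\nabla_{\textbf{x}} f_c + \sum_{j\neq c}p_j\nabla_{\textbf{x}} f_j$ is a genuinely useful observation the paper never makes: it shows that the theorem's exact proportionality depends on matching the explanation target with the argument of the loss, and fails when $\phi$ is a logit gradient while FGSM uses cross-entropy after softmax. One caveat: your fallback argument, that the competing-class terms enter only as higher-order corrections, does not hold up --- those terms are of the same (first) order as $-(1-p_c)\nabla_{\textbf{x}} f_c$ and become negligible only in the limit $p_c\to 1$. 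So of your two proposed routes, only the logit-loss route constitutes a proof; fortunately it is the one you develop in full.
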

\begin{proof}
According to~\cite{Simonyan-etal13deepInsideCNNsaliency}, $f_c(\textbf{x}_0)$ is explained via linear approximation by computing its first-order Taylor expansion:
\begin{equation}
    f_c(\textbf{x}) \approx f_c(\textbf{x}_0) + \textbf{w}_c^T \cdot (\textbf{x}-\textbf{x}_0)
\end{equation}
where $\phi(f_c, \textbf{x})=\textbf{w}_c = \nabla_{\textbf{x}} f_c(\textbf{x}_0)$.

On the other and, in FGSM~\cite{Goodfellow-etal14explaining}, let $L(f(\textbf{x}_0), y)$ be the cross entropy loss, and the target label to be $c$, then 
\begin{equation}
\begin{split}
    \boldsymbol\delta &= \nabla_{\textbf{x}} L(f(\textbf{x}_0), c) \\
    &= \nabla_{\textbf{x}} \Big( -\sum_y \mathds{1}[y=c] \log f_y(\textbf{x}_0) \Big) \\
    &= -\nabla_{\textbf{x}} \log f_c(\textbf{x}_0) \\
    &= -\frac{1}{f_c(\textbf{x}_0)} \nabla_{\textbf{x}} f_c(\textbf{x}_0),
\end{split}
\end{equation}

where $\frac{1}{f_c(\textbf{x}_0)}$ is a scalar. Therefore, we have $\phi(f_c, \textbf{x}) \propto -\boldsymbol\delta$.
\end{proof}
Therefore, if a prediction $f_c(\textbf{x})$ does not have a stable explanation, then this prediction could potentially be attacked towards multiple directions, thus requiring doing more iterations of adversarial training. In experiments, we will show that ATEX could improve the efficacy of adversarial training in each iteration.

\section{Experiments}
The experimental results here demonstrate the efficacy of ATEX on several aspects. Specifically, in Section~\ref{sec:exp_stable}, we show how ATEX could improve interpretation stability. In Section~\ref{sec:exp_attack}, we show that ATEX could mitigate noises in feature importance maps generated by vanilla gradient interpretation. In Section~\ref{sec:exp_assess}, we further demonstrate that ATEX can accelerate the adversarial training process, which ATEX requires fewer adversarial training samples to obtain a decent defense performance.

\begin{table}[t!]
	\centering
      \begin{tabular}[0.1\textwidth]{cccc}
      \toprule
      \multicolumn{1}{c}{$\epsilon_1$} & \multicolumn{1}{c}{\textbf{Model Accuracy}} & \multicolumn{1}{c}{\textbf{Rank Correlation}} & \multicolumn{1}{c}{\textbf{Top-k Intersection}}  \\
      \multicolumn{1}{c}{} & \multicolumn{1}{c}{} & \multicolumn{1}{c}{(ATEX / \cite{Dombrowski-etal19geometry})} & \multicolumn{1}{c}{(ATEX / \cite{Dombrowski-etal19geometry})} \\
      \hline 
      $0.02$ & $0.884$  & $0.766/0.708$  & $0.747/0.674$  \\ 
      $0.04$ & $0.878$  & $0.715/0.622$  & $0.717/0.574$  \\ 
      $0.08$ & $0.870$  & $0.686/0.536$  & $0.702/0.484$  \\ 
      \bottomrule
      \end{tabular}
     \vspace{2pt}
	\caption{Defense against untargeted explanation manipulation on FashionMNIST.} \label{table:fashion_untar}
\vspace{0pt}
\end{table}

\begin{table}[t!]
	\centering
      \begin{tabular}[0.1\textwidth]{cccc}
      \toprule
      \multicolumn{1}{c}{$\epsilon_1$} & \multicolumn{1}{c}{\textbf{Model Accuracy}} & \multicolumn{1}{c}{\textbf{Rank Correlation}} & \multicolumn{1}{c}{\textbf{Top-k Intersection}}  \\
      \multicolumn{1}{c}{} & \multicolumn{1}{c}{} & \multicolumn{1}{c}{(ATEX / \cite{Dombrowski-etal19geometry})} & \multicolumn{1}{c}{(ATEX / \cite{Dombrowski-etal19geometry})} \\
      \hline 
      $0.02$ & $0.887$  & $0.746/0.698$  & $0.717/0.671$  \\ 
      $0.04$ & $0.878$  & $0.708/0.618$  & $0.681/0.577$  \\ 
      $0.08$ & $0.867$  & $0.710/0.540$  & $0.667/0.502$  \\ 
      \bottomrule
      \end{tabular}
     \vspace{2pt}
	\caption{Defense against targeted explanation manipulation on FashionMNIST.} \label{table:fashion_tar}
\vspace{0pt}
\end{table}

\begin{table}[t!]
	\centering
      \begin{tabular}[0.1\textwidth]{cccc}
      \toprule
      \multicolumn{1}{c}{$\epsilon_1$} & \multicolumn{1}{c}{\textbf{Model Accuracy}} & \multicolumn{1}{c}{\textbf{Rank Correlation}} & \multicolumn{1}{c}{\textbf{Top-k Intersection}}  \\
      \multicolumn{1}{c}{} & \multicolumn{1}{c}{} & \multicolumn{1}{c}{(ATEX / \cite{Dombrowski-etal19geometry})} & \multicolumn{1}{c}{(ATEX / \cite{Dombrowski-etal19geometry})} \\
      \hline 
      $0.02$ & $0.988$  & $0.864/0.842$  & $0.760/0.732$  \\ 
      $0.04$ & $0.987$  & $0.825/0.787$  & $0.744/0.709$  \\ 
      $0.08$ & $0.988$  & $0.783/0.705$  & $0.808/0.676$  \\ 
      \bottomrule
      \end{tabular}
     \vspace{2pt}
	\caption{Defense against untargeted explanation manipulation on MNIST.} \label{table:mnist_untar}
\vspace{0pt}
\end{table}

\begin{table}[t!]
	\centering
      \begin{tabular}[0.1\textwidth]{cccc}
      \toprule
      \multicolumn{1}{c}{$\epsilon_1$} & \multicolumn{1}{c}{\textbf{Model Accuracy}} & \multicolumn{1}{c}{\textbf{Rank Correlation}} & \multicolumn{1}{c}{\textbf{Top-k Intersection}}  \\
      \multicolumn{1}{c}{} & \multicolumn{1}{c}{} & \multicolumn{1}{c}{(ATEX / \cite{Dombrowski-etal19geometry})} & \multicolumn{1}{c}{(ATEX / \cite{Dombrowski-etal19geometry})} \\
      \hline 
      $0.02$ & $0.987$  & $0.856/0.842$  & $0.699/0.732$  \\ 
      $0.04$ & $0.988$  & $0.825/0.784$  & $0.719/0.708$  \\ 
      $0.08$ & $0.987$  & $0.785/0.708$  & $0.766/0.678$  \\ 
      \bottomrule
      \end{tabular}
     \vspace{2pt}
	\caption{Defense against targeted explanation manipulation on MNIST.} \label{table:mnist_tar}
\vspace{0pt}
\end{table}

\subsection{Experiment Settings}
\begin{itemize}[leftmargin=*]
\item \textbf{Datasets.} We conduct our experiment on the Fashion-MNIST dataset and MNIST dataset. Fashion-MNIST consists of a training set of 60,000 examples and a test set of 10,000 examples. Each example is a 28 $\times$ 28 gray-scale image with a label from 10 categories. Image pixels of all examples are normalized to $[0, 1]$ range. The classification model has two convolutional layers and two FC layers. We use Adam optimizer to train the model with the cross-entropy loss. MNIST consists of a training set of 60,000 examples and a test set of 10,000 examples. Data properties and preprocessing methods are similar to those of FashionMNIST. The classification model also has two convolutional layers and two FC layers.

\item \textbf{Metrics for Interpretation Similarity.} Following the settings in~\cite{Ghorbani-etal19fragile}, we consider three metrics for quantifying the similarity between two feature importance maps. To measure statistic similarity, we have \textit{Spearman’s rank order correlation} which utilizes rank correlation to compare the similarity, and \textit{Top-k intersection} which compares similarity by the size of intersection of the $k$
most important features. For visual similarity, we adopt the Structural Similarity Index (SSIM), which measures the perceptual difference between two similar images. 

\end{itemize}

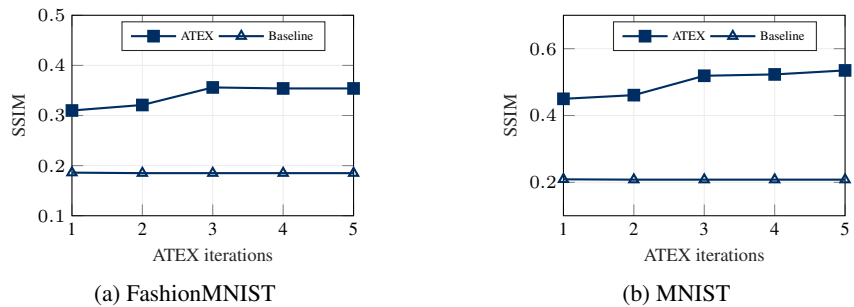
\begin{figure}[t]
        \centering
        \begin{subfigure}[b]{0.43\textwidth}
        \setlength\figureheight{1.05in}
        \setlength\figurewidth{1.55in}
        \centering  \scriptsize
%
%
\definecolor{mycolor1}{rgb}{1,0.65,0}%
\definecolor{mycolor2}{rgb}{1,0.94,0}%
\definecolor{mycolor3}{rgb}{1.0, 0.25, 0.25}%
\definecolor{mycolor4}{rgb}{0.39, 0.58, 0.93}
\definecolor{mycolor5}{rgb}{0., 0.18, 0.39}

\begin{tikzpicture}

\begin{axis}[%
width=0.951\figurewidth,
height=\figureheight,
at={(0\figurewidth,0\figureheight)},
scale only axis,
scaled x ticks=true,
xticklabels={1,2,3,4,5},
xtick={1,2,3,4,5},
xmin=1,
xmax=5,
xlabel style={font=\color{white!15!black}},
xlabel={ATEX iterations},
ymin=0.10,
ymax=0.50,
grid, 
grid style={line width=.15pt, draw=gray!15}, 
ylabel style={font=\color{white!15!black}},
ylabel={SSIM},
axis background/.style={fill=white},
legend columns = 2,
legend style={legend cell align=left, align=left, draw=white!15!black, nodes={scale=0.7}, at={(0.90, 0.97)}},
axis background/.style={fill=white} 
]
\addplot [color=mycolor5, mark=square*, mark options={solid, mycolor5}, thick]
  table[row sep=crcr]{%
1	0.310\\
2	0.321\\
3	0.356\\
4	0.354\\
5	0.354\\
};
\addlegendentry{ATEX}

\addplot [color=mycolor5, mark=triangle, mark options={solid, mycolor5}, thick]
  table[row sep=crcr]{%
1	0.186\\
2	0.185\\
3	0.185\\
4	0.185\\
5	0.185\\
};
\addlegendentry{Baseline}

\end{axis}
\end{tikzpicture}%
            \caption[Network2]%
            {{\small FashionMNIST}}    
        \end{subfigure}
        \hspace{10pt}
        \begin{subfigure}[b]{0.43\textwidth}
        \setlength\figureheight{1.05in}
        \setlength\figurewidth{1.55in}
        \centering  \scriptsize
%
%
\definecolor{mycolor1}{rgb}{1,0.65,0}%
\definecolor{mycolor2}{rgb}{1,0.94,0}%
\definecolor{mycolor3}{rgb}{1.0, 0.25, 0.25}%
\definecolor{mycolor4}{rgb}{0.39, 0.58, 0.93}
\definecolor{mycolor5}{rgb}{0., 0.18, 0.39}

\begin{tikzpicture}

\begin{axis}[%
width=0.951\figurewidth,
height=\figureheight,
at={(0\figurewidth,0\figureheight)},
scale only axis,
scaled x ticks=true,
xticklabels={1,2,3,4,5},
xtick={1,2,3,4,5},
xmin=1,
xmax=5,
xlabel style={font=\color{white!15!black}},
xlabel={ATEX iterations},
ymin=0.10,
ymax=0.70,
grid, 
grid style={line width=.15pt, draw=gray!15}, 
ylabel style={font=\color{white!15!black}},
ylabel={SSIM},
axis background/.style={fill=white},
legend columns = 2,
legend style={legend cell align=left, align=left, draw=white!15!black, nodes={scale=0.7}, at={(0.90, 0.97)}},
axis background/.style={fill=white} 
]
\addplot [color=mycolor5, mark=square*, mark options={solid, mycolor5}, thick]
  table[row sep=crcr]{%
1	0.450\\
2	0.461\\
3	0.519\\
4	0.523\\
5	0.535\\
};
\addlegendentry{ATEX}

\addplot [color=mycolor5, mark=triangle, mark options={solid, mycolor5}, thick]
  table[row sep=crcr]{%
1	0.209\\
2	0.208\\
3	0.208\\
4	0.208\\
5	0.208\\
};
\addlegendentry{Baseline}

\end{axis}
\end{tikzpicture}%
            \caption[Network2]%
            {{\small MNIST}}    
        \end{subfigure}
        \caption{Quantitative evaluation of interpretation smoothness effect.} 
        \label{fig:quant_smooth}
\end{figure}

\begin{figure}[t]
\centering
 \includegraphics[width=1.00\textwidth]{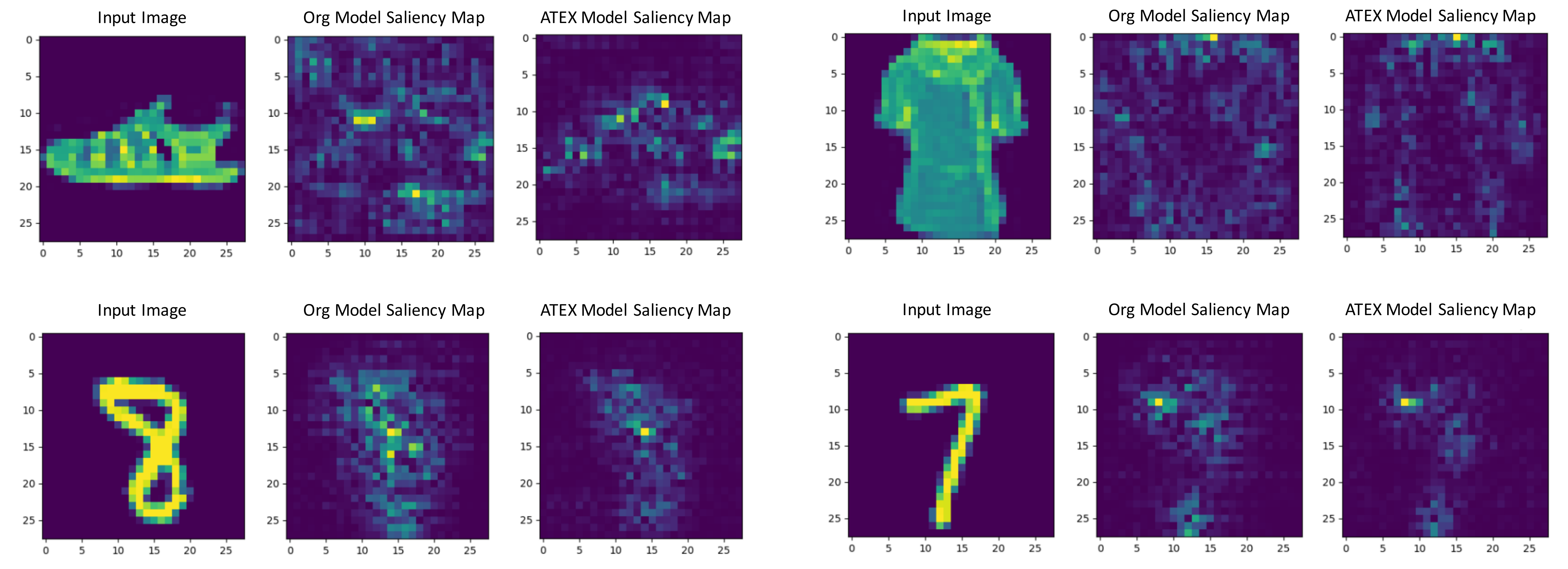}
 \caption{Gradient explanation map produced from the original network and the network trained with ATEX. Three images form a case, which consists of an input, a gradient explanation from the original network, and a gradient explanation from ATEX-trained network. } \label{fig:case}
\end{figure}

\subsection{Defense Performance Against Explanation Manipulation Attack}\label{sec:exp_stable}
In this section, we conduct experiments to measure the interpretation stability of models after applying ATEX. To manipulate explanations, we adopt the two explanation attack approaches introduced in Section~\ref{explanation manipulation}. For targeted attack, we manage to increase model's attention in a predefined region with a size of 5$\times$5 pixels, which are determined randomly in runtime. For untargeted attack, we suppress the contribution of the 50 most important pixels in original samples. Due to the piecewise-linear property~\cite{Ghorbani-etal19fragile} of deep models that use ReLU as activation function, attacking methods that rely on Hessian matrices will not work since second-order gradients are zero. Hence, in this work, we replace ReLU activation with smoothed softplus activation when training models, so~\cite{Dombrowski-etal19geometry} can be seen as the baseline method. Subsequent steps such as generating explanations, manipulation samples, and applying defense, are all implemented on softplus activated models. 

Results are summarized in Table~\ref{table:fashion_untar}$\sim$ Table~\ref{table:mnist_tar}. Compared with the baseline method, we see that ATEX improves the stability of interpretation, in terms of both Rank Correlation and Top-k Intersection metrics. The relative improvement is more significant as the attack magnitude $\epsilon_1$ increases. A larger $\epsilon_1$ means a greater manipulation range ($\Delta_1$ and $\Delta_2$ are set to be equal to $\epsilon_1$). The model prediction accuracy will be slightly affected on FashionMNIST, but remains consistent on MNIST.

\subsection{Qualitative Assessment of Explanation}\label{sec:exp_assess}
In this part, we show that ATEX helps reducing noises in interpretation feature maps, even when we only use vanilla gradient~\cite{Simonyan-etal13deepInsideCNNsaliency} as the interpretation method. 
We choose SmoothGrad~\cite{Smilkov-etal18smoothgrad} as the reference method, because SmoothGrad can reduces the noise in sensitivity maps, and we use SmoothGrad to provide direction to generate $\textbf{x}^i$ in ATEX. In our experiment, we run SmoothGrad on normally training models without applying ATEX. Specifically, we add pixel-wise Gaussian noise to 100 copies of each test image and compute the average of vanilla gradients to get feature maps. In comparison, after running ATEX for $5$ iterations, we use vanilla gradient to produce feature importance maps directly for test images. The baseline feature maps are obtained by vanilla gradient on normally trained models. We expect the interpretation results of ATEX to be more similar to Smoothgrad than baseline results. 
This is validated in Figure~\ref{fig:quant_smooth}, as ATEX achieve higher SSIM scores than the baseline results. We also show the explanation results in Figure~\ref{fig:case}. We could observe that the noise level is significantly reduced in the feature maps after applying ATEX training to models, even though we only use vanilla gradient to generate feature maps. It thus indicates that models trained with ATEX are more focused on the objects in input.

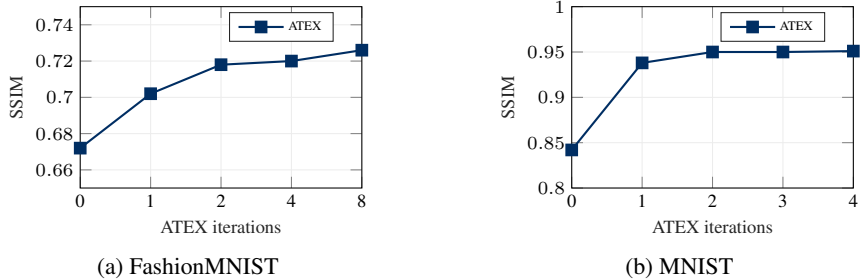
\begin{figure}[t]
        \centering
        \begin{subfigure}[b]{0.43\textwidth}
        \setlength\figureheight{0.95in}
        \setlength\figurewidth{1.55in}
        \centering  \scriptsize
%
%
\definecolor{mycolor1}{rgb}{1,0.65,0}%
\definecolor{mycolor2}{rgb}{1,0.94,0}%
\definecolor{mycolor3}{rgb}{1.0, 0.25, 0.25}%
\definecolor{mycolor4}{rgb}{0.39, 0.58, 0.93}
\definecolor{mycolor5}{rgb}{0., 0.18, 0.39}

\begin{tikzpicture}

\begin{axis}[%
width=0.951\figurewidth,
height=\figureheight,
at={(0\figurewidth,0\figureheight)},
scale only axis,
scaled x ticks=true,
xticklabels={0,1,2,4,8},
xtick={1,2,3,4,5},
xmin=1,
xmax=5,
xlabel style={font=\color{white!15!black}},
xlabel={ATEX iterations},
ymin=0.65,
ymax=0.75,
grid, 
grid style={line width=.15pt, draw=gray!15}, 
ylabel style={font=\color{white!15!black}},
ylabel={SSIM},
axis background/.style={fill=white},
legend columns = 2,
legend style={legend cell align=left, align=left, draw=white!15!black, nodes={scale=0.7}, at={(0.90, 0.97)}},
axis background/.style={fill=white} 
]
\addplot [color=mycolor5, mark=square*, mark options={solid, mycolor5}, thick]
  table[row sep=crcr]{%
1	0.672\\
2	0.702\\
3	0.718\\
4	0.720\\
5   0.726\\
};
\addlegendentry{ATEX}

\end{axis}
\end{tikzpicture}%
            \caption[Network2]%
            {{\small FashionMNIST}}    
        \end{subfigure}
        \hspace{10pt}
        \begin{subfigure}[b]{0.43\textwidth}
        \setlength\figureheight{0.95in}
        \setlength\figurewidth{1.55in}
        \centering  \scriptsize
%
%
\definecolor{mycolor1}{rgb}{1,0.65,0}%
\definecolor{mycolor2}{rgb}{1,0.94,0}%
\definecolor{mycolor3}{rgb}{1.0, 0.25, 0.25}%
\definecolor{mycolor4}{rgb}{0.39, 0.58, 0.93}
\definecolor{mycolor5}{rgb}{0., 0.18, 0.39}

\begin{tikzpicture}

\begin{axis}[%
width=0.951\figurewidth,
height=\figureheight,
at={(0\figurewidth,0\figureheight)},
scale only axis,
scaled x ticks=true,
xticklabels={0,1,2,3,4},
xtick={1,2,3,4,5},
xmin=1,
xmax=5,
xlabel style={font=\color{white!15!black}},
xlabel={ATEX iterations},
ymin=0.80,
ymax=1.00,
grid, 
grid style={line width=.15pt, draw=gray!15}, 
ylabel style={font=\color{white!15!black}},
ylabel={SSIM},
axis background/.style={fill=white},
legend columns = 2,
legend style={legend cell align=left, align=left, draw=white!15!black, nodes={scale=0.7}, at={(0.90, 0.97)}},
axis background/.style={fill=white} 
]
\addplot [color=mycolor5, mark=square*, mark options={solid, mycolor5}, thick]
  table[row sep=crcr]{%
1	0.842\\
2	0.938\\
3	0.950\\
4	0.950\\
5	0.951\\
};
\addlegendentry{ATEX}

\end{axis}
\end{tikzpicture}%
            \caption[Network2]%
            {{\small MNIST}}    
        \end{subfigure}
        \caption{Efficacy of adversarial training after apply ATEX.} 
        \label{fig:adv_train}
\end{figure}

\subsection{Efficacy of Adversarial Training After Applying ATEX}\label{sec:exp_attack}
We now investigate the correlation between explanation stability and adversarial training efficacy. Our analysis in Section~\ref{sec:adv_stable} demonstrates that stability in explanation can potentially improve the efficacy of adversarial training. In this experiment, given a pretrained classifier, we run ATEX for several iterations. After each iteration, to evaluate the efficacy of adversarial training, we further fine-tune the classifier with adversarial training and then evaluate the robustness of the resultant model against a new round of attack. We adopt FGSM as the approach for both adversarial samples generation. The attack step length $\epsilon$ = 0.1. For the adversarial training, we generate 50,000 FGSM attack samples from training data and combine them with original training data to fine-tune the model. Results are shown in Figure~\ref{fig:adv_train}. The x-axis denotes the number of iterations of ATEX, where $iteration=0$ means pure adversarial training without using ATEX. From the figures, we observe that as we run more iterations of ATEX, the performance of adversarial training also increases. It indicates that ATEX reduces the potential weakness contained in models.

\section{Related Work}
Model explanations could be generally indicated and defined as the information which can help people understand the model behaviors. Typically, those useful information could be some significant features that contribute a lot to model predictions. To effectively extract explanations from models, there are two major methodologies, where the first category is based on instance perturbation~\cite{ribeiro2016should} and the second is based on gradient information~\cite{ancona2017towards}. As for the first category, LIME~\cite{ribeiro2016should} is a representative method, utilizing shallow linear models to approximate the model local behaviors with feature importance scores. Further, SHAP~\cite{lundberg2017unified} unifies and generalizes the perturbation-based method with the aid of cooperative game theory, where each feature would be assigned with a Shapley value for explanation purposes. Some other important methods within this category can also be found in~\cite{bach2015pixel,datta2016algorithmic,ribeiro2018anchors}. As for the second category of methods, explanations are mainly extracted and calculated according to the model gradients. Representative methods can be found in~\cite{selvaraju2017grad,chattopadhay2018grad,sundararajan2017axiomatic,shrikumar2017learning,smilkov2017smoothgrad}, where gradients are used as an indicator for feature sensitivity towards model predictions. In this work, we specifically focus on the second category of methods for generating explanations, and aim to make the gradient-based explanations more robust and stable. 

Although model explanations are useful, it can be fragile and easy to be manipulated under certain circumstances. In~\cite{ghorbani2019interpretation}, the authors showed that the gradient-based explanations can be sensitive to imperceptible perturbations of images, which could lead to the unstructured changes in the generated salience maps. One of the approaches proposed in~\cite{kindermans2019reliability} utilized a constant shift on the target instance to manipulate the explanation salience map, where the biases of the neural network are also changed to fit the original prediction. Besides, parameter randomization~\cite{adebayo2018sanity} and network fine-tuning~\cite{heo2019fooling} are also effective approaches in manipulating explanations. To effectively handle such issue, robust and stable explanations are preferred for model interpretability. In~\cite{Yeh-etal19sensitivityOfExplanation}, the authors rigorously define two concepts for generating smooth explanations (i.e., fidelity and sensitivity), and further propose to optimize these metrics for robust explanation generation. Also, the authors in~\cite{Dombrowski-etal19geometry,Ghorbani-etal19fragile} replace the common ReLU activation function with the softplus function, aiming to smooth the explanations during the model training process. Moreover, utilizing the Lipschitz constant of the explanations to locally lower the sensitivity to small perturbations is another valid methodology to improve the explanation robustness~\cite{alvarez2018robustness,melis2018towards}. Our work will specifically focus on the model training perspective for explanation stability under a relatively general setting. 

Besides manipulation over interpretation, a more well studied domain of machine learning security is adversarial attack and defense on model prediction. Adversarial attack on model prediction refers to perturbing input in order to change its prediction results by the model, even though most of the attacks cannot be perceived by humans~\cite{Goodfellow-etal14explaining, Szeg-etal13intriguing}. Adversarial attack can be categorized into different categories according to the threat model, including untargeted attack vs targeted attack~\cite{Carlini-Wagner17towards}, one-shot attack vs iterative attack~\cite{Kurakin-etal17atScale}, data dependent vs universal attack~\cite{Moosavi-etal17universalPerturb}, perturbation attack vs replacement attack~\cite{Thys-Ranst19surveilanceCamera}. Considering such relation between model explanation and adversarial attack, our work also discuss the potential benefit to the target model with the aid of the explanation stability.

\section{Conclusion}
Despite the unique role in improving transparency for neural networks, interpretation methodologies have recently been shown to be vulnerable to manipulation. That is, malevolent users could slightly perturb the input to change its interpretation result while maintaining prediction output. In this work, we propose a new training method called ATEX, which tries to improve model interpretation robustness against manipulation on input. ATEX does not explicitly control interpretation, but implicitly regularize it via control the predictions around training samples. We also show that interpretation stability is closely related to the potential efficacy of adversarial training, since adversarial attack direction has a strong relation to interpretation. Through experiments, we show that ATEX could stabilize interpretation of model predictions. ATEX also reduce noises in feature importance maps, similar to SmoothGrad, even the maps are obtained with vanilla gradient. In addition, ATEX boosts the efficacy of adversarial training.

Future work could investigate how to detect manipulated inputs, which is more efficient especially on large datasets, instead of retraining models. Another interesting direction is how to improve training with augmented data so that the prediction accuracy on clean samples will not decrease.

%

\newpage


\medskip
\bibliographystyle{abbrv}
\bibliography{nips20_ninghao}

\end{document}